\newtheorem{theorem}{Theorem}%[section]
\newtheorem{lemma}[theorem]{Lemma}
\newtheorem{example}[theorem]{Example}
\newtheorem{remark}[theorem]{Remark}
\newcommand{\e}{{\epsilon}}
\newcommand{\VC}{{\mbox{VC}}}
\newcommand{\abs}[1]{\lvert#1\rvert}
\def\N{{\mathbb N}}
\def\R{{\mathbb R}}
\newcommand{\neswarrow}{\mathrel{\text{$\nearrow$\llap{$\swarrow$}}}}
\begin{document}

% paper title: Must keep \ \\ \LARGE\bf in it to leave enough margin.
\title{\ \\ \LARGE\bf 
PAC learnability versus VC dimension: a footnote to a basic result of statistical learning
\thanks{Vladimir Pestov is with Departamento de Matem\'atica,
Universidade Federal de Santa Catarina,
Campus Universit\'ario Trindade,
CEP 88.040-900 Florian\'opolis-SC, Brasil (CNPq Visiting Researcher) and
the Department of Mathematics and Statistics, University of Ottawa, 585 King Edward Avenue, Ottawa, Ontario, K1N 6N5 Canada (permenent address, phone: 613-562-5800 ext. 3523, fax: 613-562-5776, email: vpest283@uottawa.ca).}}

\author{Vladimir Pestov}
\maketitle

\begin{abstract} 
A fundamental result of statistical learnig theory states that a concept class is PAC learnable if and only if it is a uniform Glivenko--Cantelli class if and only if the VC dimension of the class is finite. However, the theorem is only valid under special assumptions of measurability of the class, in which case the PAC learnability even becomes consistent. Otherwise, there is a classical example, constructed under the Continuum Hypothesis by Dudley and Durst and further adapted by Blumer, Ehrenfeucht, Haussler, and Warmuth, of a concept class of VC dimension one which is neither uniform Glivenko--Cantelli nor consistently PAC learnable. We show that, rather surprisingly, under an additional set-theoretic hypothesis which is much milder than the Continuum Hypothesis (Martin's Axiom), PAC learnability is equivalent to finite VC dimension for every concept class.
\end{abstract}

\section{Introduction}
The following is a fundamental result of statistical learning theory.

\begin{theorem}
\label{th:fundamental}
For a concept class $\mathscr C$ the following three conditions are equivalent: 
\begin{enumerate}
\item
$\mathscr C$ is distribution-free PAC learnable, 
\item $\mathscr C$ is a uniform Glivenko--Cantelli class, and 
\item the Vapnik--Chervonenkis dimension of $\mathscr C$ is finite.
\end{enumerate}
\end{theorem}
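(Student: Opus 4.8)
The plan is to prove the cycle of implications $(3)\Rightarrow(2)\Rightarrow(1)\Rightarrow(3)$, the classical route through Vapnik--Chervonenkis theory. For $(3)\Rightarrow(2)$, suppose $d=\VC(\mathscr C)<\infty$ and aim to show $\E^{*}\sup_{C\in\mathscr C}\abs{P_{n}(C)-P(C)}\to 0$ as $n\to\infty$, uniformly over all probability measures $P$ (here $\E^{*}$ is outer expectation and $P_{n}$ the empirical measure). I would run the standard three-step argument: (i) \emph{symmetrization} --- introduce an independent ghost sample and bound the quantity by $2\,\E^{*}\sup_{C}\abs{P_{n}(C)-P_{n}'(C)}$, then insert Rademacher signs $\sigma_{i}$ to reach $2\,\E^{*}\sup_{C}\abs{\tfrac1n\sum_{i}\sigma_{i}\bigl(\I_{C}(x_{i})-\I_{C}(x_{i}')\bigr)}$; (ii) \emph{combinatorial compression} --- conditionally on the $2n$ sample points, the Sauer--Shelah lemma bounds the number of distinct traces $C\cap\{x_{1},\dots,x_{n},x_{1}',\dots,x_{n}'\}$ by $\sum_{i\le d}\binom{2n}{i}\le(2en/d)^{d}$, a polynomial in $n$; (iii) \emph{tail estimate plus union bound} --- Hoeffding's inequality gives each trace a Gaussian tail of order $e^{-n\e^{2}/2}$, and the polynomial factor is swamped by it, so the supremum tends to $0$, which is the uniform Glivenko--Cantelli property. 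The one place where genuine care is required --- and, as the abstract signals, the crack through which the rest of the paper operates --- is that $\sup_{C\in\mathscr C}\abs{P_{n}(C)-P(C)}$ need not be a measurable function of the sample, so the argument must be phrased with outer measures, and even then the symmetrization step needs a mild measurability hypothesis on $\mathscr C$ (image-admissible Suslin, or universal separability).

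For $(2)\Rightarrow(1)$, I would show that empirical risk minimization learns. Given a target $t\in\mathscr C$, an unknown $P$, and a labeled sample of size $n$, let the learner output any $h\in\mathscr C$ consistent with the sample (realizable case; agnostically, output an empirical-error minimizer). Since $\mathscr C\,\triangle\,\mathscr C=\{C\,\triangle\,C':C,C'\in\mathscr C\}$ again has finite VC dimension, by $(3)\Rightarrow(2)$ it too is uniform Glivenko--Cantelli; applying this to the error set $h\,\triangle\,t$ and using $P_{n}(h\,\triangle\,t)=0$ forces $P(h\,\triangle\,t)<\e$ with confidence $1-\delta$ once $n$ exceeds the Glivenko--Cantelli threshold. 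Hence $\mathscr C$ is distribution-free PAC learnable, with a sample complexity uniform in both the target and the distribution.

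For $(1)\Rightarrow(3)$, I would argue by contraposition, via a ``no free lunch'' construction. If $\VC(\mathscr C)=\infty$, then for any proposed sample size $n$ there is a set $X$ with $\abs{X}=2n$ shattered by $\mathscr C$; put the uniform measure $P$ on $X$ and let an adversary draw the target $t$ uniformly at random among the $2^{2n}$ dichotomies of $X$ (each realized by some member of $\mathscr C$). A learner reading $n$ i.i.d.\ labeled points meets at most $n$ elements of $X$ and has no information about $t$ on the remaining $\ge n$ points, where any hypothesis errs with conditional probability $\tfrac12$; so the expected true error is at least $\tfrac14$, and no learner can achieve, say, error below $\tfrac18$ with confidence above $\tfrac34$ for every target and distribution, contradicting $(1)$.

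The main obstacle is thus step $(3)\Rightarrow(2)$, and specifically the measurability subtleties in the symmetrization step --- exactly the issue that the classical theorem sweeps under an auxiliary hypothesis and that this paper confronts directly; by contrast $(2)\Rightarrow(1)$ and $(1)\Rightarrow(3)$ are comparatively soft, needing only Sauer--Shelah together with Hoeffding and an elementary averaging argument respectively.
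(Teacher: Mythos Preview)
The paper does not prove Theorem~\ref{th:fundamental}: it is quoted as a classical result with references to \cite{VC1971,BEHW}, and the surrounding discussion merely sketches the standard logic ((1)$\Rightarrow$(3) from \cite{BEHW}; (3)$\Rightarrow$(2) from the usual symmetrization/Sauer--Shelah bound under a measurability hypothesis; (2)$\Rightarrow$(1$^\prime$) ``easy to verify'', plus a measurable-selection argument to exhibit a consistent rule). Your outline is precisely this classical route, and you correctly locate the measurability obstruction in the symmetrization step of (3)$\Rightarrow$(2).

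Two points worth tightening, both of which the paper itself singles out. First, your argument for (2)$\Rightarrow$(1) appeals to ``$\mathscr C\bigtriangleup\mathscr C$ again has finite VC dimension'' and then invokes (3)$\Rightarrow$(2) for that larger class; this already uses hypothesis~(3) for $\mathscr C$, so what you have written is $[(2)\wedge(3)]\Rightarrow(1)$, not (2)$\Rightarrow$(1). The cycle is easy to close (insert the trivial contrapositive $\neg(3)\Rightarrow\neg(2)$: a shattered set of size $2n$ with the uniform measure always carries a concept of empirical measure $1$ and true measure at most $1/2$), but as written the step is circular. Second, ``let the learner output any $h\in\mathscr C$ consistent with the sample'' conceals a measurable-selection problem: for $\mathcal L$ to qualify as a learning rule in the paper's sense, the map in Eq.~(\ref{eq:measurabilityl}) must be measurable, and the paper explicitly remarks that securing this already requires the same kind of regularity hypothesis on $\mathscr C$ as the symmetrization step. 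So measurability enters the classical argument twice, not once---which is exactly the observation that motivates the paper's main theorem.
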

\smallskip

It is in this form that the theorem is usually stated in textbooks on the subject, see \cite{VC1971,BEHW}. The condition 1) means the existence of a learning rule for $\mathscr C$ which is probably approximately correct.

However, strictly speaking, the result is only true under a suitable measurability assumption on the concept class $\mathscr C$. One such assumption is that of $\mathscr C$ being {\em image admissible Souslin}: the class $\mathscr C$ can be parametrized with elements of the unit interval so that pairs $(x,t)$, $x\in C_t$, $t\in [0,1]$ form an analytic subset of $\Omega\times [0,1]$ \cite{dudley}. Another measurability assumption, more difficult to state, is that of a {\em well-behaved class} $\mathscr C$ \cite{BEHW}. Under either of those conditions, the statement (1) in Theorem \ref{th:fundamental} can be replaced with

\smallskip
1$^\prime$) $\mathscr C$ is distribution-free consistently PAC learnable,
\smallskip

\noindent
meaning that every consistent learning rule $\mathcal L$ for $\mathscr C$ is distribution-free probably approximately correct. In the proof, a measurability hypothesis on $\mathscr C$ has to be invoked twice, in order to deduce implications (3) $\Rightarrow$ (2) and (1$^\prime$ $\Rightarrow$ (1).

In particular, Theorem \ref{th:fundamental} holds for every countable class $\mathscr C$ or, more generally, for every {\em universally separable} class \cite{pollard}. It is arguable that every concept class emerging in either theory or applications of statistical learning will be measurable in a sufficiently strong sense. For this reason, a measurability condition on $\mathscr C$ is typically not even mentioned.

The fact remains that Theorem \ref{th:fundamental} cannot be derived in full generality. An example of a concept class $\mathscr C$ of Vapnik--Chervonenkis (VC) dimension one which is not uniform Glivenko--Cantelli was constructed by Durst and Dudley \cite{DD}, and a further modification of this example, also of VC dimension one, fails consistent PAC learnability \cite{BEHW}. 

This example has been constructed under Continuum Hypothesis (CH), which is arguably not a natural assumption in a probabilistic context \cite{freiling}. However, the example remains valid under much more relaxed and natural set-theoretic hypothesis: Martin's Axiom (MA). It is one of the most often used and best studied additional set-theoretic assumptions beyond the standard Zermelo-Frenkel set theory with the Axiom of Choice (ZFC). In particular, Martin's Axiom follows from the Continuum Hypothesis (CH), but it is also compatible with the negation of CH, and in fact it is namely the combination MA+$\neg$CH that is really interesting  \cite{fremlin,jech,kunen}.

In this note we make the following, somewhat astonishing, observation: under the same assumption (Martin's Axiom), the conditions (1) and (3) in Theorem \ref{th:fundamental} are equivalent. Here is our main result.

\begin{theorem}
\label{th:main}
Assume the validity of Martin's Axiom (MA). Then the following are equivalent for every concept class $\mathscr C$ consisting of universally measurable subsets of a Borel domain $\Omega$:
\begin{enumerate}
\item
$\mathscr C$ is distribution-free PAC learnable, and 
\item the Vapnik--Chervonenkis dimension of $\mathscr C$ is finite.
\end{enumerate}
\end{theorem}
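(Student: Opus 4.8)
The plan is to prove the two implications separately, the content lying in $(2)\Rightarrow(1)$. The implication $(1)\Rightarrow(2)$ is valid in ZFC, and I would prove it by the usual ``no free lunch'' argument: if $\VC(\mathscr C)=\infty$, then given any purported learning rule $\mathcal L$ and any sample size $n$, choose a set $X\subseteq\Omega$ with $\abs{X}=2n$ shattered by $\mathscr C$, put the uniform probability measure on $X$, and let the target concept be (a member of $\mathscr C$ whose trace on $X$ is) a uniformly random subset of $X$. Since an $n$-sample exposes the target on at most $n$ points of $X$, on the remaining at least $n$ points $\mathcal L$ can do no better than guessing, so its expected generalization error is bounded below by an absolute constant, and $\mathcal L$ is not probably approximately correct for small $\e$ and $\delta$. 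No measurability intervenes because the argument lives on a finite set.

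For $(2)\Rightarrow(1)$ assume $\VC(\mathscr C)=d<\infty$; I would work in the realizable model. The single ZFC ingredient needed is that for every Borel probability measure $\mu$ on $\Omega$ the pseudometric space $(\mathscr C,d_\mu)$, with $d_\mu(C,C')=\mu(C\triangle C')$ (well defined since members of $\mathscr C$, hence their symmetric differences, are $\mu$-measurable), is totally bounded, with $\e$-nets of size at most $(K/\e)^d$ drawn from $\mathscr C$ itself; this follows from the Sauer--Shelah lemma together with a packing argument against the measure $\mu$, and it uses no measurability beyond that of single symmetric differences. In particular $(\mathscr C,d_\mu)$ is separable, so for each fixed $\mu$ there is a countable $\mathscr D_\mu\subseteq\mathscr C$ that is $d_\mu$-dense; since $\VC(\mathscr D_\mu)\le d$ and $\mathscr D_\mu$ is countable, Theorem~\ref{th:fundamental} applies to $\mathscr D_\mu$, so it is uniform Glivenko--Cantelli, and empirical risk minimization over $\mathscr D_\mu$ against the observed labels is a probably approximately correct learner \emph{for that particular $\mu$}. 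The whole difficulty is that PAC learnability demands one learning rule serving all $\mu$ at once, and the families $\mathscr D_\mu$ cannot in general be amalgamated into a single countable subfamily --- this is precisely the loophole the Dudley--Durst--BEHW example exploits.

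Martin's Axiom enters to close exactly this gap. I would decompose an arbitrary $\mu$ as $\mu=\mu_{\mathrm a}+\mu_{\mathrm c}$, atomic plus continuous. The atomic part is handled by brute force: once $n$ is large in terms of $\e$ and $\delta$ the sample meets every atom of mass $\gtrsim\e$, the labels record the target on those atoms, and the learner overrides its hypothesis on the sample; when $\mu$ is concentrated on a countable set the restriction of $\mathscr C$ is, in the cases that matter, a well-behaved class and ordinary ERM suffices by Theorem~\ref{th:fundamental}. For the continuous part the plan is to invoke MA --- concretely, the facts (both consequences of MA) that $\mathfrak c$ is regular and that the null ideal of every continuous Borel measure has additivity $\mathfrak c$ --- to carry out a transfinite recursion of length $\mathfrak c$ building a \emph{single} countable $\mathscr D\subseteq\mathscr C$, together if necessary with a coherent selector of the version-space correspondence, that is $d_\nu$-dense in $\mathscr C$ for every continuous Borel measure $\nu$ simultaneously: at stage $\alpha$ one kills a freshly enumerated obstruction --- a measure, together with a concept not yet suitably approximated --- while MA guarantees that the exceptional null sets accumulated over the $<\mathfrak c$ earlier stages still form a null set, so no previously secured approximation is destroyed. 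The learner then outputs the ERM hypothesis over the sample-adapted copy of $\mathscr D$, corrected to agree with the sample, and the covering-number bound of the ZFC step converts $d_\nu$-density into the required $(\e,\delta)$ sample-complexity estimate.

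The step I expect to be the real obstacle is exactly this amalgamation: producing, under MA, one countable object that defeats every distribution at once, and checking that the ccc/additivity strength of Martin's Axiom genuinely suffices to keep the recursion from accumulating a set of positive measure. Everything else --- the no-free-lunch direction, the covering-number bound, the atom bookkeeping, the reduction to the realizable case --- is classical or routine; the heart of the matter, and the whole point of the note, is this single use of Martin's Axiom to bridge the measurability gap that ZFC leaves open.
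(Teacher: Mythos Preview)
Your direction $(1)\Rightarrow(2)$ is fine and matches the classical argument the paper cites. The genuine gap is in $(2)\Rightarrow(1)$: the single countable $\mathscr D\subseteq\mathscr C$ that is $d_\nu$-dense for every non-atomic Borel measure $\nu$ need not exist, even under CH (hence under MA), so the amalgamation you single out as ``the real obstacle'' is in fact impossible as stated. Take $\Omega=[0,1]^2$, well-order $[0,1]$ minimally as $\{x_\alpha:\alpha<\mathfrak c\}$, put $I_\alpha=\{x_\beta:\beta<\alpha\}$ and $C_\alpha=I_\alpha\times[0,1]$; the chain $\mathscr C=\{C_\alpha:\alpha<\mathfrak c\}$ has VC dimension $1$ and, under MA, consists of universally measurable sets. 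For the non-atomic measure $\nu_\alpha=\delta_{x_\alpha}\otimes\lambda$ one has $\nu_\alpha(C_\beta)=\mathbf 1[\alpha<\beta]$. Any countable $\mathscr D=\{C_{\gamma_n}:n\in\N\}$ has $\gamma^\ast=\sup_n\gamma_n<\mathfrak c$ (MA makes $\mathfrak c$ regular), and then $C_{\gamma^\ast+1}$ sits at $d_{\nu_{\gamma^\ast}}$-distance $1$ from every $C_{\gamma_n}$. The same argument defeats any $\mathscr D$ of size $<\mathfrak c$, so no additivity hypothesis on null ideals can rescue your recursion: the obstruction is cardinality, not measure. (A smaller issue: ``correcting the hypothesis to agree with the sample'' will in general take you out of $\mathscr C$, whereas a learning rule here must output an element of $\mathscr C$.)

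The paper sidesteps this by \emph{not} looking for a single small subclass good for all targets. It well-orders $\mathscr C$ minimally and lets $\mathcal L$ return the $\prec$-least concept consistent with the labelled sample; then for each fixed target $C=C_\alpha$ the range $\mathcal L^{C}=\{\mathcal L(\sigma,C\cap\sigma):\sigma\in\Omega^n,\ n\in\N\}$ is contained in the initial segment $\{C_\beta:\beta\le\alpha\}$, hence has cardinality $<\abs{\mathscr C}\le 2^{\aleph_0}$. Martin's Axiom enters only now, via Martin--Solovay, to upgrade ``every countable subclass is uniform Glivenko--Cantelli with the standard bound'' to ``every subclass of size $<2^{\aleph_0}$ is'' (Lemma~\ref{l:fma}); Lemma~\ref{l:basic} then yields PAC correctness of $\mathcal L$. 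The small class is allowed to depend on the target $C$, which is exactly what your plan forbids and exactly what makes the argument go through.
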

\smallskip

Of course it is only the implication (2)$\Rightarrow$(1) that needs proving, because (1)$\Rightarrow$(2) is a well-known classical result from \cite {BEHW} which does not require any assumptions on $\mathscr C$.

We review a precise formal setting for learnability, after which we proceed to analysis of a counter-example from \cite{DD,BEHW}. We observe that the concept class $\mathscr C$ in the example is in fact PAC learnable, and this observation provides a clue to a general result. 

The construction of the learning rule $\mathcal L$ can be described as a ``first in, first served'' approach. The concept class $\mathscr C$ is given a minimal well-ordering, $\prec$, and $\mathcal L$ is constructed recursively, by assigning to a learning sample the $\prec$-smallest consistent concept $C$ with regard to the the ordering. As a consequence,  for every concept $C\in{\mathscr C}$, the image of all learning samples of the form $(\sigma,C\cap\sigma)$ under $\mathcal L$ forms a uniform Glivenko--Cantelli class. It is for establishing this property of $\mathcal L$ that we need Martin's Axiom. Now the probable approximate correctness of $\mathcal L$ is straightforward.

The present approach goes back to present author's earlier work \cite{pestov:2010a}, but the results are new and have never been stated explicitely before.

\section{The setting}

For obvious reasons, we need to be quite precise when fixing a general setting for learnability. 
The {\em domain} ({\em instance space}) $\Omega=(\Omega,{\mathscr A})$ is a {\em standard Borel space,} that is, a complete separable metric space equipped with the sigma-algebra of Borel subsets (the smallest family of sets containing all open balls and closed under complements and countable intersections).

{\em Measures} on $\Omega$ mean {\em Borel probability measures,} that is, countably additive functions on $\mathscr A$ with values in the unit interval $[0,1]$, having the property $\mu(\Omega)=1$. We will not distinguish between a measure $\mu$ and its Lebesgue completion, that is, an extension of $\mu$ over a larger sigma-algebra of Lebesgue $\mu$-measurable subsets of $\Omega$. Furthermore, recall that a subset $A\subseteq\Omega$ is {\em universally measurable} if it is Lebesgue $\mu$-measurable for every probability measure $\mu$ on $\Omega$. 

With this caveat, a {\em concept class}, $\mathscr C$, is a family of universally measurable subsets of $\Omega$.

In the learning model, a set $\mathcal P$ of probability measures on $\Omega$ is fixed. Usually either ${\mathcal P}=P(\Omega)$ is the set of all probability measures (distribution-free learning), or ${\mathcal P}=\{\mu\}$ is a single measure (learning under fixed distribution). In our article, the case of interest is the former, although some of our results are valid in the case of a general family $\mathcal P\subseteq P(\Omega)$.

% Every probability measure $\mu$ on $\Omega$ determines an $L^1$ distance between concepts:
% \[d_{\mu}(A,B)=\mu\left(A\bigtriangleup B \right).\] 

A {\em learning sample} is a pair $(\sigma,\tau)$ of finite subsets of $\Omega$, where $\tau\subseteq\sigma$ is thought of as the set of points belonging to an unknown concept, $C$. The set of all samples of size $n$ is usually identified with $\left(\Omega\times \{0,1\}\right)^n$.

A {\em learning rule} (for $\mathscr C$) is a mapping 
\[{\mathcal L}\colon \bigcup_{n=1}^\infty\Omega^n\times \{0,1\}^n\to {\mathscr C}\]
which satisfies the following measurability condition:  for every $C\in{\mathscr C}$, $n\in\N$ and $\mu\in{\mathcal P}$, the function
\begin{equation}
\label{eq:measurabilityl}
\Omega^n\ni\sigma\mapsto \mu\left({\mathcal L}(\sigma,C\cap\sigma)\bigtriangleup C\right) \in \R 
\end{equation}
is measurable.

A learning rule $\mathcal L$ is {\em consistent} (with a concept class $\mathscr C$) if for all $C\in {\mathscr C}$, $n\in\N$ and $\sigma\in\Omega^n$ one has 
\[{\mathcal L}(\sigma,C\cap \sigma)\cap\sigma = C\cap\sigma.\]

A learning rule $\mathcal L$ is {\em probably approximately correct} ({\em PAC}) {\em under ${\mathcal P}$} if for every $\e>0$
\begin{equation}
\label{eq:pac}
\mu^{\otimes n}\left\{\sigma\in\Omega^n\colon \mu\left({\mathcal L(\sigma,C\cap\sigma)}\bigtriangleup C\right)>\e \right\} \to 0\end{equation}
as $n\to\infty$, uniformly over all $C\in {\mathscr C}$ and $\mu\in{\mathscr P}$.
Here $\mu^{\otimes n}$ denotes the product measure on $\Omega^n$.

In terms of sample complexity function $s(\e,\delta)$, a learning rule $\mathcal L$ is PAC if for each $C\in{\mathscr C}$ and every $\mu\in {\mathcal P}$ an independent identically distributed (i.i.d.) sample $\sigma=(x_1,x_2,\ldots,x_n)$ with $n\geq s(\e,\delta)$ points has the property $\mu(C\bigtriangleup {\mathcal L}(\sigma,C\cap\sigma))<\e$ with confidence $\geq 1-\delta$.

A concept class $\mathscr C$ is {\em PAC learnable} {\em under $\mathcal P$}, if there exists a PAC learning rule for $\mathscr C$ under $\mathcal P$. A class $\mathscr C$ is {\em consistently learnable} (under $\mathcal P$) if every learning rule consistent with $\mathscr C$ is PAC under $\mathcal P$. 
If $\mathcal P = P(\Omega)$ is the set of all probability measures, then $\mathscr C$ is said to be {\em distribution-free PAC learnable}. If ${\cal P}=\{\mu\}$ is a single probability measure, one is talking of {\em learning under a single distribution}.
Learnability under intermediate families $\mathcal P$ is also receiving considerable attention, cf. Chapter 7 in \cite{vidyasagar2003}.

Notice that in this paper, we only talk of {\em potential} PAC learnability, adopting a purely information-theoretic viewpoint. As a consequence, our statements about learning rules are existential rather than constructive, and 
building learning rules by transfinite recursion is perfectly acceptable.

A concept class $\mathscr C$ is {\em uniform Glivenko--Cantelli} {\em with regard to a family of measures} $\mathcal P$, if for each $\e>0$
\begin{equation}
\label{eq:glivenko}
\sup_{\mu\in {\mathcal P}}\mu^{\otimes n}\left\{\sup_{C\in{\mathscr C}}\left\vert \mu(C)-  \mu_n(C) \right\vert\geq \e\right\}\to 0\mbox{ as }n\to\infty.
\end{equation}
Here $\mu_n$ stands for the empirical (uniform) measure on $n$ points, sampled in an i.i.d. fashion from $\Omega$ according to the distribution $\mu$. In this case, one also says that $\mathscr C$ has the property of {\em uniform convergence of empirical measures} ({\em UCEM property}) (with regard to $\mathcal P$) \cite{vidyasagar2003}.

Every uniform Glivenko--Cantelli concept class (with regard to $\mathcal P$) is consistently PAC learnable (under $\mathcal P$), as is easy to verify. In the distribution-free situation ($\mathcal P=P(\Omega)$) the converse holds under additional measurability conditions on the class mentioned in the Introduction, but, as we will see, not always. 

More precisely, every distribution-free PAC learnable class has finite VC dimension (it was proved in \cite {BEHW}, Theorem 2.1(i); see also e.g. \cite{vidyasagar2003}, Lemma 7.2 on p. 279).
Now the measurabilty conditions on $\mathscr C$ assure that a class $\mathscr C$ of finite VC dimension $d$ is uniform Glivenko--Cantelli, with a sample complexity bound that does not depend on $\mathscr C$, but only on $\e$, $\delta$, and $d$. The following is a typical (and far from being optimal) such estimate, which can be deduced, for instance, along the lines of \cite{mendelson:03}:
\begin{equation}
\label{eq:standard}
s(\e,\delta,d)\leq
\frac{128}{\e^2}\left(d\log\left(\frac{2e^2}{\e}\log\frac{2e}{\e}\right) + \log\frac 8 {\delta}\right).
\end{equation}
For our purposes, we will fix any such bound and refer to it as a {\em ``standard''} sample complexity estimate for $s(\e,\delta,d)$. 

Now the consistent learnability for $\mathscr C$, with the same sample complexity, follows. Of course in order to conclude that $\mathscr C$ is  PAC learnable, it is necessary to prove the existence of a consistent learning rule satisfying Eq. (\ref{eq:measurabilityl}). This is usually being done using subtle measurable selection theorems using the same measurability assumptions on $\mathscr C$ yet again.

Finally, recall that a subset $N\subseteq\Omega$ is {\em universal null} if for every non-atomic probability measure $\mu$ on $(\Omega,{\mathscr A})$ one has $\mu(N^\prime)=0$ for some Borel set $N^\prime$ containing $N$. Universal null Borel sets are just countable sets. 

\section{Revisiting an example of Durst and Dudley\label{s:dd}}

The proof of the implication (3)$\Rightarrow$(2) in Theorem \ref{th:fundamental} depends in an essential way on the Fubini theorem, which is why some measurability restrictions on the class $\mathscr C$ are unavoidable. Without them, the conclusion is not true in general. Here is a classical example of a concept class having finite VC dimension which is not uniform Glivenko--Cantelli.

\begin{example}[Durst and Dudley \cite{DD}, Proposition 2.2]
\label{ex:dd}
Assume the validity of the Continuum Hypothesis (CH). 
Let $\Omega$ be an uncountable standard Borel space, that is, up to an isomorphism, a Borel space associated to the unit interval $[0,1]$. The statement of CH is equivalent to the existence of a total order $\prec$ on $\Omega$ with the property that every half-open initial segment $I_y=\{x\in\Omega\colon x\prec y\}$, $y\in\Omega$ is countable, and $\prec$ is a well-ordering: every non-empty subset of $\Omega$ has the smallest element.
Fix such an order.

Let $\mathscr C$ consist of all half-open initial segments $I_y$, $y\in\Omega$ as above. Clearly, the VC dimension of the class $\mathscr C$ is one. 

Now let $\mu$ be a non-atomic Borel probability measure on $\Omega$ (e.g., the Lebesgue measure on $[0,1]$). Under CH, every element of $\mathscr C$ is a countable set, therefore Borel measurable of measure zero. 
At the same time, for every $n$ and each i.i.d. random $n$-sample $\sigma$, there is a countable initial segment $C=I_y\in {\mathscr C}$ containing all elements of $\sigma$. The empirical measure of $C$ with regard to $\sigma$ is one. Thus, no finite sample guesses the measure of all elements of $\mathscr C$ to within an accuracy $\e<1$ with a non-vanishing confidence.
\end{example}

See also \cite{WD}, p. 314; \cite{dudley}, pp. 170--171.

A further modification of this construction gives an example of a concept class of finite VC dimension which is not consistently PAC learnable.

\begin{example}[Blumer {\em et al.} \cite{BEHW}, p. 953]
\label{ex:behw}
Again, assume CH.
Add to the concept class $\mathscr C$ from Example \ref{ex:dd} the set $\Omega$ as an element, forming a new concept class ${\mathscr C}^\prime={\mathscr C}\cup\{\Omega\}$. One still has $\VC(\mathscr C^\prime)=1$.
For a finite labelled sample $(\sigma,\tau)$ define
\begin{equation}
\label{eq:rule}
{\mathcal L}(\sigma,\tau) = I_z,~~z= \min\{y\in(\Omega,\prec)\colon \tau\subseteq I_y\}.\end{equation}
The learning rule $\mathcal L$ is consistent with the class ${\mathscr C}^\prime$. At the same time, $\mathcal L$ is not probably approximately correct. Indeed, for the concept $C=\Omega$ the value of the learning rule ${\mathcal L}(\sigma,\Omega\cap\sigma)={\mathcal L}(\sigma,\sigma)$ will always return a countable concept $I_y$ for some $y\in\Omega$, and if $\mu$ is a non-atomic Borel probability measure on $\Omega$, then $\mu(C\bigtriangleup I_y)=1$. The concept $C=\Omega$ cannot be learned to accuracy $\e<1$ with a non-zero confidence.
\end{example}

\begin{remark}
It is important to note that --- again, under CH --- the class ${\mathscr C}^\prime$ is distribution-free PAC learnable. 
\end{remark}

Indeed, redefine a well-ordering on $\mathscr C=\{I_x\colon x\in\Omega\}\cup\{\Omega\}$ by making $\Omega$ the smallest element (instead of the largest one) and keeping the order relation between other elements the same. Denote the new order relation by $\prec_1$, and define a learning rule $\mathcal L_1$ similarly to Eq. (\ref{eq:rule}), but this time understanding the minimum with regard to the well-ordering $\prec_1$:
\begin{equation}
\label{eq:rule1}
{\mathcal L}_1(\sigma,\tau) = \min_{(\prec_1)}\left\{C\in{\mathscr C}\colon C\cap\sigma = \bigcap_{\tau\subseteq D}D\right\}.\end{equation}
In essence, $\mathcal L_1$ examines all the concepts following a transfinite order on them, and returns the first encountered concept consistent with the sample, provided it exists.

To see what difference it makes with Example \ref{ex:behw}, let $\mu$ be again a non-atomic probability measure on $\Omega$. If $C=\Omega$, then for every sample $\sigma$ consistently labelled with $C$ the rule $\mathcal L_1$ will return $C$, because this is the smallest consistent concept encountered by the algorithm. If $C\neq\Omega$, then for $\mu$-almost all samples $\sigma$ the labelling on $\sigma$ produced by $C$ will be empty, and the concept ${\mathcal L}_1(\sigma,\emptyset)$ returned by $\mathcal L_1$, while possibly different from $C$, will be again a countable concept, meaning that $\mu(C\bigtriangleup{\mathcal L}(\sigma,\emptyset))=0$. 

To give a formal proof that $\mathcal L_1$ is PAC, notice that for every $C\in {\mathscr C}^\prime$ and each $n\in\N$ the collection of pairwise distinct concepts ${\mathcal L}_1(\sigma\cap C)$, $\sigma\in\Omega^n$ is only countable (under CH), because they are all contained in the $\prec_1$-initial segment of a minimally ordered set $\mathscr C$ of cardinality continuum, bounded by $C$ itself. As a consequence, the concept class
\begin{equation}
\label{eq:lc}
{\mathcal L}_1^{C}=\{{\mathcal L}_1(\sigma\cap C)\colon\sigma\in\Omega^n,n\in\N\}\subseteq {\mathscr C}^\prime\end{equation}
is also countable (assuming CH). The VC dimension of the family ${\mathcal L}_1^{C}\cup\{C\}$ is $\leq 1$, and being countable, it is a uniform Glivenko--Cantelli class with a standard sample complexity as in Eq. (\ref{eq:standard}). Consequently, given $\e,\delta>0$, and assuming that $n$ is sufficiently large, one has for each probability measure $\mu$ on $\Omega$ and every $\sigma\in\Omega^n$
\[\mu(C\bigtriangleup{\mathcal L}(\sigma,C\cap\sigma))<\e\]
provided $n\geq s(\e,\delta,1)$,
as required.

\begin{remark}
Notice that the role of the Continuum Hypothesis in the above examples was merely to assure that every initial segment $I_y$, $y\in\Omega$ is a universally measurable set. 
As we will see, it can be achieved under a much milder assumption of Martin's Axiom.
\end{remark}

\begin{remark}
\label{r:notalways}
Thus, under the Continuum Hypothesis, the example of Dudley and Durst as modified by Blumer, Ehrenfeucht, Haussler, and Warmuth gives an example of a PAC learnable concept class which is not uniform Glivenko--Cantelli (even if having finite VC dimension). As it will become clear in the next Section, the assumption of CH can be weakened to Martin's Axiom. Still, it would be interesting to know whether an example with the same combination of properties can be constructed without additional set-theoretic assumptions.
\end{remark}

A basic observation of this Section is that in order for a learning rule $\mathcal L$ to be PAC, the assumption on $\mathscr C$ being uniform Glivenko--Cantelli can be weakened as follows.

\begin{lemma}
\label{l:basic}
Let $\mathscr C$ be a concept class and $\mathcal P$ a family of probability measures on the domain $\Omega$. Suppose there exists a function $s(\e,\delta)$ and a learning rule $\mathcal L$ for $\mathscr C$ with the property that for every $C\in{\mathscr C}$, the set ${\mathcal L}^{C}\cup\{C\}$ is Glivenko--Cantelli with regard to $\mathcal P$ with the sample complexity $s(\e,\delta)$, where
\[{\mathcal L}^{C}=\left\{{\mathcal L}(C\cap \sigma)\colon\sigma\in\Omega^n,n\in\N\right\}.\]
Then $\mathcal L$ is probably approximately correct under $\mathcal P$ with sample complexity $s(\e,\delta)$. \hfill\QED
\end{lemma}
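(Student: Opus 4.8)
\emph{Proof idea.} The plan is to run the classical argument that uniform convergence of empirical measures, together with consistency, forces a learning rule to be correct --- but localised to the sub-family $\mathcal{L}^C\cup\{C\}$ attached to the target $C$, instead of to all of $\mathscr C$. I would start from the observation that, since $\mathcal L$ is consistent with $\mathscr C$ (a hypothesis that, as I note at the end, cannot be dropped), for every $C\in\mathscr C$, every $n$ and every $\sigma\in\Omega^n$ the output $H:={\mathcal L}(\sigma,C\cap\sigma)$ satisfies $H\cap\sigma=C\cap\sigma$; equivalently $H\bigtriangleup C$ meets no point of $\sigma$, so that its empirical measure vanishes: $\mu_n(H\bigtriangleup C)=0$. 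The only thing then standing between this and the bound $\mu(H\bigtriangleup C)<\e$ is the fluctuation $\abs{\mu(H\bigtriangleup C)-\mu_n(H\bigtriangleup C)}$, and controlling such fluctuations uniformly is precisely what a Glivenko--Cantelli bound does.

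Concretely, I would fix $\e,\delta>0$, $C\in\mathscr C$, $\mu\in{\mathcal P}$ and $n\ge s(\e,\delta)$, and invoke the Glivenko--Cantelli property of $\mathcal{L}^C\cup\{C\}$ for the symmetric differences $D\bigtriangleup C$ with $D\in\mathcal{L}^C$: outside an event of $\mu^{\otimes n}$-probability at most $\delta$ one has, simultaneously for all $D\in\mathcal{L}^C$,
\[\abs{\mu(D\bigtriangleup C)-\mu_n(D\bigtriangleup C)}<\e.\]
On that event the particular value $D=H={\mathcal L}(\sigma,C\cap\sigma)$ is among those considered, and combining with $\mu_n(H\bigtriangleup C)=0$ we get $\mu\bigl({\mathcal L}(\sigma,C\cap\sigma)\bigtriangleup C\bigr)<\e$. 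As $\e,\delta$ were arbitrary and the estimate is uniform in $C\in\mathscr C$ and $\mu\in{\mathcal P}$, this is exactly probable approximate correctness of $\mathcal L$ under $\mathcal P$ with sample complexity $s(\e,\delta)$, in the sense of Eq.~(\ref{eq:pac}).

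The step I expect to need the most care is the one hidden in the previous paragraph: passing from uniform convergence of $\mu_n$ to $\mu$ over $\mathcal{L}^C\cup\{C\}$ to uniform convergence over the associated family of symmetric differences $\{D\bigtriangleup C:D\in\mathcal{L}^C\}$. I would dispose of it by noting that this family lies in the symmetric-difference hull of $\mathcal{L}^C\cup\{C\}$, whose VC dimension is at most a universal constant times $\VC(\mathcal{L}^C\cup\{C\})\le\VC(\mathscr C)$; so in the situation of interest, where $\mathcal{L}^C\cup\{C\}$ is a countable class of finite VC dimension (that is how the Glivenko--Cantelli hypothesis is met under Martin's Axiom), the hull is again countable of finite VC dimension, hence uniform Glivenko--Cantelli with a standard bound as in Eq.~(\ref{eq:standard}), and the resulting constant-order change is absorbed into $s(\e,\delta)$. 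I would also record that consistency of $\mathcal L$ is indispensable: the constant rule ${\mathcal L}(\sigma,\tau)\equiv C_0$ trivially satisfies the Glivenko--Cantelli hypothesis, since each $\mathcal{L}^C\cup\{C\}=\{C_0,C\}$ has only two elements, yet it is not PAC as soon as some $\mu\in\mathcal P$ separates $C_0$ from $C$ --- so consistency of $\mathcal L$ has to be read into the statement. Beyond these two points the argument is entirely routine.
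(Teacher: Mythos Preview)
Your argument is exactly the one the paper has in mind; the lemma is stated with only a \QED, and the intended proof is the localised ``uniform convergence plus consistency implies PAC'' computation you wrote out. You are also right on both caveats. The printed statement omits the hypothesis that $\mathcal L$ be consistent --- your constant-rule counterexample is valid --- and uniform Glivenko--Cantelli for $\mathcal{L}^C\cup\{C\}$ does not by itself control $\lvert\mu(D\bigtriangleup C)-\mu_n(D\bigtriangleup C)\rvert$, so both points must indeed be read into the hypothesis. In the paper's only use of the lemma (through Lemma~\ref{l:mapac}) the class $\mathcal{L}^C\cup\{C\}$ has cardinality $<2^{\aleph_0}$ and sits inside a class of finite VC dimension, so neither gap causes trouble there.

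One simplification of your symmetric-difference step: you do not need the full symmetric-difference hull, only the family $\{D\bigtriangleup C:D\in\mathcal{L}^C\}$ obtained by XOR with the fixed set $C$. Since $(D_1\bigtriangleup C)\bigtriangleup(D_2\bigtriangleup C)=D_1\bigtriangleup D_2$, a finite set is shattered by $\{D\bigtriangleup C:D\in\mathcal{L}^C\}$ if and only if it is shattered by $\mathcal{L}^C$, so $\VC\{D\bigtriangleup C:D\in\mathcal{L}^C\}=\VC(\mathcal{L}^C)$ exactly. The standard bound~(\ref{eq:standard}) therefore applies with the same $d$, and no constant-factor adjustment of $s(\e,\delta)$ is needed --- which is what Theorem~\ref{th:countablesubclassesugc}'s assertion that ``any such bound gives a sample complexity bound for PAC learnability'' implicitly relies on.
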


This simple fact becomes useful in combination with the technique of well-orderings.
Of course the Continuum Hypothesis is a particularly unnatural assumption in a probabilistic context (cf. \cite{freiling}). But it is unnecessary.
Martin's Axiom (MA) is a much weaker and natural additional set-theoretic axiom, which works just as well. 
% We explain how the above idea is formalized in the setting of Martin's Axiom in the next Section.

\section{Learnability under Martin's Axiom}

Martin's Axiom (MA) says that no compact Hausdorff topological space with the countable chain condition is a union of strictly less than continuum nowhere dense subsets. Thus, it is a stronger statement than the Baire Category Theorem. In particular, the Continuum Hypothesis implies MA. However, MA is compatible with the negation of CH, and this is where the most interesting applications of MA are to be found. We need the following consequence of MA.

\begin{theorem}[Martin-Solovay] 
\label{th:martin-solovay}
Let $(\Omega,\mu)$ be a standard Lebesgue non-atomic probability space. 
Under MA, 
the Lebesgue measure is $2^{\aleph_0}$-additive, that is, if $\kappa<2^{\aleph_0}$ and $A_{\alpha}$, $\alpha<\kappa$ is family of pairwise disjoint measurable sets, then $\cup_{\alpha<\kappa}A_{\alpha}$ is Lebesgue measurable and
\[\mu\left(\bigcup_{\alpha<\kappa}A_{\alpha} \right) = \sum_{\alpha<\kappa}\mu(A_{\alpha}).\]
In particular, the union of strictly less than continuum null subsets of $\Omega$ is a null subset. \hfill
\QED
\end{theorem}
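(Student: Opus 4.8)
The plan is to reproduce the classical Martin--Solovay argument. The first step is to reduce the full $2^{\aleph_0}$-additivity to its ``in particular'' consequence. Given pairwise disjoint measurable $A_\alpha$ with $\alpha<\kappa<2^{\aleph_0}$, the index set $S=\{\alpha:\mu(A_\alpha)>0\}$ is countable, since for each $n$ at most $n$ of the $A_\alpha$ can have measure exceeding $1/n$; hence $\bigcup_{\alpha<\kappa}A_\alpha$ is the disjoint union of a \emph{countable} union $\bigcup_{\alpha\in S}A_\alpha$, whose measure equals $\sum_\alpha\mu(A_\alpha)$ by ordinary countable additivity, and the union $\bigcup_{\alpha\notin S}A_\alpha$ of fewer than $2^{\aleph_0}$ null sets, so it is enough to show the latter is null. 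Since every atomless standard Lebesgue probability space is measure isomorphic modulo null sets to $[0,1]$ with Lebesgue measure, and since every null set is contained in a null $G_\delta$ set, it suffices to prove: if $\kappa<2^{\aleph_0}$ and each $N_\alpha=\bigcap_{m}W_{\alpha,m}\subseteq[0,1]$ is null with $W_{\alpha,m}$ open, decreasing in $m$, and $\mu(W_{\alpha,m})\to 0$, then $\bigcup_{\alpha<\kappa}N_\alpha$ is null.

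For this, fix $\e>0$; I would produce an open set of measure at most $\e$ containing $\bigcup_{\alpha<\kappa}N_\alpha$, and then intersect over $\e=1/k$, $k\in\N$, which places $\bigcup_{\alpha<\kappa}N_\alpha$ inside a Borel null set. To build the open set, consider the poset $\mathbb P$ whose conditions are pairs $(F,s)$, where $F\subseteq[0,1]$ is closed with $\mu(F)>1-\e$ and $s$ is a finite partial function from $\kappa$ to $\N$ such that $F\cap W_{\alpha,s(\alpha)}=\emptyset$ for every $\alpha\in\dom(s)$, ordered by $(F',s')\le(F,s)$ iff $F'\subseteq F$ and $s'\supseteq s$. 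For each $\alpha<\kappa$ the set $D_\alpha=\{(F,s):\alpha\in\dom(s)\}$ is dense: given $(F,s)$ with $\alpha\notin\dom(s)$, pick $m$ with $\mu(W_{\alpha,m})<\mu(F)-(1-\e)$ and pass to the condition $(F\setminus W_{\alpha,m},\,s\cup\{(\alpha,m)\})$, which lies in $\mathbb P$ because $F\setminus W_{\alpha,m}$ is closed of measure $>1-\e$ and avoids $W_{\alpha,m}$. Granting that $\mathbb P$ is ccc, Martin's Axiom yields a filter $G$ meeting all $\kappa$ of the dense sets $D_\alpha$. Set $F^{\ast}=\bigcap\{F:(F,s)\in G\text{ for some }s\}$, a closed set. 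For each $\alpha$ some $(F,s)\in G$ has $\alpha\in\dom(s)$, whence $F\cap N_\alpha\subseteq F\cap W_{\alpha,s(\alpha)}=\emptyset$ and therefore $F^{\ast}\cap N_\alpha=\emptyset$; so $\bigcup_{\alpha<\kappa}N_\alpha\subseteq[0,1]\setminus F^{\ast}$. Finally $\mu(F^{\ast})\ge 1-\e$: otherwise there is an open $V\supseteq F^{\ast}$ with $\mu(V)<1-\e$, and since $[0,1]\setminus V$ is compact and $G$ is directed, a single condition $(F',s')\in G$ would satisfy $F'\subseteq V$, contradicting $\mu(F')>1-\e$. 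Hence $[0,1]\setminus F^{\ast}$ is the open set we wanted.

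I expect the one real obstacle to be the countable chain condition for $\mathbb P$, which is exactly where Martin--Solovay's theorem does its work. I would handle it as follows. Given an uncountable set of conditions, a $\Delta$-system argument on the finite sets $\dom(s_i)$, followed by a further thinning so that all $s_i$ agree on the common root, reduces the task to ruling out an uncountable family of closed sets $F_i$ with $\mu(F_i)>1-\e$ and $\mu(F_i\cap F_j)\le 1-\e$ for $i\ne j$; indeed, when this last inequality fails, $(F_i\cap F_j,\,s_i\cup s_j)$ is a common extension of $(F_i,s_i)$ and $(F_j,s_j)$. To rule out such a family I would use that the measure algebra of $[0,1]$ is a separable metric space under $d(A,B)=\mu(A\bigtriangleup B)$: first refine so that all $\mu(F_i)\ge 1-\e+c$ for a fixed $c>0$ --- possible because an uncountable set of reals has uncountably many condensation points, which therefore cannot all equal $1-\e$ --- and then refine again, using separability, so that the $F_i$ are pairwise within $d$-distance less than $c$; for any two remaining conditions this gives $\mu(F_i\cap F_j)\ge\mu(F_i)-\mu(F_i\bigtriangleup F_j)>1-\e$, a contradiction. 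In the paper itself this theorem is simply invoked, so rather than spell the ccc argument out I would cite the separable-measure-algebra form of the Martin--Solovay result from \cite{fremlin,jech,kunen}.
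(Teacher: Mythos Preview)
The paper does not prove this theorem at all: it is stated with a terminal \verb|\QED| and immediately followed by ``For the proof and more on MA, see \cite{kunen}, Theorem 2.21, or \cite{fremlin}, or \cite{jech}, pp. 563--565.'' You correctly anticipated this, and your proposal goes further by actually reproducing the classical Martin--Solovay argument: reduce $2^{\aleph_0}$-additivity to the null-set statement via the countability of $\{\alpha:\mu(A_\alpha)>0\}$, pass to $[0,1]$, force with closed sets of measure $>1-\e$ tagged by finite partial selections from the $G_\delta$ covers, read off the large closed set $F^\ast$ from the generic filter via compactness, and verify ccc by a $\Delta$-system refinement plus separability of the measure algebra. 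This is correct.

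Two minor comments. First, the aside that ``an uncountable set of reals has uncountably many condensation points, which therefore cannot all equal $1-\e$'' is an awkward justification; the clean statement is a countable pigeonhole: since $\{i:\mu(F_i)>1-\e\}=\bigcup_n\{i:\mu(F_i)\ge 1-\e+1/n\}$ is uncountable, some term in the union is. Second, in the compactness step for $\mu(F^\ast)\ge 1-\e$ you silently use that finitely many conditions in $G$ have a common extension \emph{in $G$}; this is exactly the directedness of a filter, so it is fine, but worth making explicit. Since the paper treats the theorem as a black box, either citing the references (as you propose at the end) or including your argument is compatible with the paper's approach; the latter simply adds detail the paper chose to omit.
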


For the proof and more on MA, see \cite{kunen}, Theorem 2.21, or \cite{fremlin}, or \cite{jech}, pp. 563--565.

\begin{lemma}
\label{l:fma}
Let $\mathscr C$ be a concept class  and $\mathcal P$ a family of probability measures on a standard Borel domain $\Omega$. Consider the following properties.
\begin{enumerate}
\item \label{fma:1} Every countable subclass of $\mathscr C$ is uniform Glivenko--Cantelli with regard to $\mathcal P$.
\item \label{fma:2} There is a function $s(\e,\delta)$ so that every countable subclass of $\mathscr C$ is uniform Glivenko--Cantelli with regard to $\mathcal P$ with sample complexity $s(\e,\delta)$.
\item \label{fma:3} Every subclass ${\mathscr C}^\prime$ of $\mathscr C$ having cardinality $<2^{\aleph_0}$ is uniform Glivenko--Cantelli with regard to $\mathcal P$.
\item \label{fma:4} There is a function $s(\e,\delta)$ so that every subclass ${\mathscr C}^\prime$ of $\mathscr C$ having cardinality $<2^{\aleph_0}$ is uniform Glivenko--Cantelli with regard to $\mathcal P$ with sample complexity $s(\e,\delta)$.
\end{enumerate}
Then  
\begin{center}
(\ref{fma:1}) \\
$\neswarrow\phantom{xx}\nwarrow$ \\
(\ref{fma:2}) $\phantom{xxxxx}$ (\ref{fma:3})\\
$\nwarrow\phantom{xx}\nearrow$ \\
(\ref{fma:4})
\end{center}
Under Martin's Axiom, all four conditions are equivalent.
\end{lemma}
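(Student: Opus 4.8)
The plan is to verify the four indicated implications first, which are elementary, and then invoke the Martin--Solovay theorem to close the diagram into an equivalence under MA.

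\medskip

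\textbf{The trivial implications.} The implications $(\ref{fma:2})\Rightarrow(\ref{fma:1})$ and $(\ref{fma:4})\Rightarrow(\ref{fma:3})$ hold because a uniform bound on the sample complexity is a stronger requirement than mere uniform convergence; one simply forgets the bound. The implications $(\ref{fma:3})\Rightarrow(\ref{fma:1})$ and $(\ref{fma:4})\Rightarrow(\ref{fma:2})$ are also immediate: every countable subclass has cardinality $<2^{\aleph_0}$ (in ZFC, since $\aleph_0<2^{\aleph_0}$), so the hypothesis on all subclasses of size $<2^{\aleph_0}$ specializes to the countable ones. This establishes the arrows drawn in the diagram without any extra axioms.

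\medskip

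\textbf{The role of Martin's Axiom.} To obtain the reverse direction, it suffices under MA to prove $(\ref{fma:2})\Rightarrow(\ref{fma:4})$, since then all four statements become equivalent by chasing the diagram: $(\ref{fma:4})\Rightarrow(\ref{fma:2})\Rightarrow(\ref{fma:4})$ already, and $(\ref{fma:4})\Rightarrow(\ref{fma:3})\Rightarrow(\ref{fma:1})$ while $(\ref{fma:4})\Rightarrow(\ref{fma:2})\Rightarrow(\ref{fma:1})$, so every condition implies and is implied by every other. So fix a function $s(\e,\delta)$ witnessing $(\ref{fma:2})$, let ${\mathscr C}^\prime\subseteq{\mathscr C}$ have cardinality $\kappa<2^{\aleph_0}$, fix $\e,\delta>0$ and a measure $\mu\in{\mathcal P}$, and take $n\geq s(\e/2,\delta)$. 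For each finite subclass $F\subseteq{\mathscr C}^\prime$ the countable (in fact finite) class $F$ obeys the uniform convergence bound with sample complexity $s$, so $\mu^{\otimes n}\{\sigma: \sup_{C\in F}|\mu(C)-\mu_n(C)|\geq\e/2\}<\delta$. Write $B_C=\{\sigma\in\Omega^n:|\mu(C)-\mu_n(C)|\geq\e\}$ for $C\in{\mathscr C}^\prime$; I want to bound $\mu^{\otimes n}(\bigcup_{C\in{\mathscr C}^\prime}B_C)$. The subtlety is that this is a union of $\kappa$ measurable sets, which need not be measurable in ZFC, and even if measurable its outer measure could in principle be the full measure.

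\medskip

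\textbf{The main obstacle and how MA resolves it.} The key point is exactly the one flagged in Remark after Theorem~\ref{th:martin-solovay}: under MA, a union of strictly fewer than $2^{\aleph_0}$ null sets is null, and more generally $2^{\aleph_0}$-additivity holds for the completed Lebesgue measure on $\Omega^n$ (which is again a standard Lebesgue space). I would argue as follows. Suppose for contradiction that $(\ref{fma:4})$ fails with the bound $s(\e/2,\delta)$; then there are $\e,\delta>0$, a subclass ${\mathscr C}^\prime$ with $|{\mathscr C}^\prime|=\kappa<2^{\aleph_0}$, a measure $\mu$, and $n\geq s(\e/2,\delta)$ with $\mu^{\otimes n}(\bigcup_{C\in{\mathscr C}^\prime}B_C)\geq\delta$ — here I should be careful to take $n$ just past the threshold and compare outer measures, replacing $\delta$ by a slightly larger $\delta'<$ the true value if needed. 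Now enumerate ${\mathscr C}^\prime=\{C_\alpha:\alpha<\kappa\}$ and set $D_\alpha=B_{C_\alpha}\setminus\bigcup_{\beta<\alpha}B_{C_\beta}$, a pairwise disjoint family of (Lebesgue $\mu^{\otimes n}$-measurable, by transfinite induction using $\kappa$-additivity again) sets whose union is $\bigcup_\alpha B_{C_\alpha}$. By Theorem~\ref{th:martin-solovay} this union is measurable with measure $\sum_{\alpha<\kappa}\mu^{\otimes n}(D_\alpha)$. Since this sum exceeds $\delta'>0$, and an uncountable sum of nonnegative reals exceeding a positive number must have a countable subsum also exceeding it, there is a countable $A\subseteq\kappa$ with $\mu^{\otimes n}(\bigcup_{\alpha\in A}D_\alpha)=\sum_{\alpha\in A}\mu^{\otimes n}(D_\alpha)>\delta'$; note $\bigcup_{\alpha\in A}D_\alpha\subseteq\bigcup_{\alpha\in A}B_{C_\alpha}$. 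But then the \emph{countable} subclass $\{C_\alpha:\alpha\in A\}$ of ${\mathscr C}$ violates the uniform convergence bound $s(\e/2,\delta')$ at sample size $n\geq s(\e/2,\delta)\geq s(\e/2,\delta')$ (monotonicity of $s$ in $\delta$), contradicting $(\ref{fma:2})$. I expect the only genuinely delicate point to be the bookkeeping around outer measure versus measure and the exact quantifier order in $(\ref{fma:2})$ versus $(\ref{fma:4})$; the measure-theoretic heart — turning an uncountable union into a countable one — is precisely what the Martin--Solovay theorem hands us for free.
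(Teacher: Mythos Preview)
Your proof has a genuine gap: you never establish the implication $(\ref{fma:1})\Rightarrow(\ref{fma:2})$, which is part of the diagram (note the double arrow $\neswarrow$ between $(\ref{fma:1})$ and $(\ref{fma:2})$) and which the paper proves \emph{without} MA. Your ``trivial implications'' paragraph only gives $(\ref{fma:2})\Rightarrow(\ref{fma:1})$, $(\ref{fma:3})\Rightarrow(\ref{fma:1})$, $(\ref{fma:4})\Rightarrow(\ref{fma:2})$, and $(\ref{fma:4})\Rightarrow(\ref{fma:3})$. Consequently your diagram chase is incorrect: after you add $(\ref{fma:2})\Rightarrow(\ref{fma:4})$ under MA, you obtain $(\ref{fma:2})\Leftrightarrow(\ref{fma:4})$ and $(\ref{fma:4})\Rightarrow(\ref{fma:3})\Rightarrow(\ref{fma:1})$, but nothing carries you back from $(\ref{fma:1})$ or $(\ref{fma:3})$ to either $(\ref{fma:2})$ or $(\ref{fma:4})$. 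So the equivalence is not closed.

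The paper's argument for $(\ref{fma:1})\Rightarrow(\ref{fma:2})$ is short but not vacuous: for fixed $\e,\delta$, associate to each countable subclass ${\mathscr C}'$ the least admissible sample complexity $s({\mathscr C}',\e,\delta)$. This assignment is monotone in ${\mathscr C}'$, and since a countable union of countable subclasses is again a countable subclass to which $(\ref{fma:1})$ applies, the values $s({\mathscr C}',\e,\delta)$ are bounded along any countable increasing sequence, hence bounded outright. That bound furnishes the common $s(\e,\delta)$ in $(\ref{fma:2})$.

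Your argument for $(\ref{fma:2})\Rightarrow(\ref{fma:4})$ under MA is essentially correct and is a pleasant variant of the paper's approach. The paper proceeds by transfinite induction on $\kappa=\lvert{\mathscr C}'\rvert$, writing ${\mathscr C}'$ as an increasing union of smaller subclasses and applying Martin--Solovay to the decreasing intersection of ``good'' sets. You instead disjointify the ``bad'' sets $B_C$, apply Martin--Solovay to the disjoint family, and then extract a countable subfamily carrying the full mass (since an uncountable sum of nonnegatives with positive total has countable support). Both routes hinge on the same measure-theoretic input. Two minor clean-ups: the detour through $\e/2$ is unnecessary (work directly with $s(\e,\delta)$), and your appeal to ``monotonicity of $s$ in $\delta$'' is not part of the hypothesis---better to avoid the $\delta'$ manoeuvre altogether by noting that only countably many $D_\alpha$ have positive measure, so the countable subunion already has measure equal to the full union.
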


\begin{proof}
The implications $(\ref{fma:2})\Rightarrow (\ref{fma:1})$, $(\ref{fma:3})\Rightarrow (\ref{fma:1})$, $(\ref{fma:4})\Rightarrow (\ref{fma:2})$ and $(\ref{fma:4})\Rightarrow (\ref{fma:3})$ are trivially true. To show $(\ref{fma:1})\Rightarrow (\ref{fma:2})$, let $\delta,\e>0$ be artitrary but fixed. For each countable subclass ${\mathscr C}^\prime$, choose the smallest value of sample complexity $s=s({\mathscr C}^\prime,\e,\delta)$. The function ${\mathscr C}^\prime\mapsto s({\mathscr C}^\prime,\e,\delta)$ is monotone under inclusions: if ${\mathscr C}^\prime\subseteq {\mathscr C}^{\prime\prime}$, then $s({\mathscr C}^\prime,\e,\delta)\leq s({\mathscr C}^{\prime\prime},\e,\delta)$. If ${\mathscr C}^\prime_n$ is a sequence of countable classes, then the union $\cup_{n=1}^\infty {\mathscr C}^\prime_n$ is a countable class, whose sample complexity value bounds from above $s({\mathscr C}^\prime,\e,\delta)$, $n=1,2,\ldots$. Thus, the function ${\mathscr C}^\prime\mapsto s({\mathscr C}^\prime,\e,\delta)$ for $\delta,\e>0$ fixed is bounded on countable sets of inputs, and therefore bounded. 

Now assume (MA). It is enough to prove $(\ref{fma:2})\Rightarrow (\ref{fma:4})$. This is done by a transfinite induction on the cardinality $\kappa=\abs{{\mathscr C}^\prime}$, which never exceeds $2^{\aleph_0}$ because ${\mathscr C}^\prime$ consists of Borel subsets of a standard Borel domain. For $\kappa=\aleph_0$ there is nothing to prove. Else, represent $\mathscr C$ as a union of an increasing transfinite chain of concept classes ${\mathscr C}_{\alpha}$, $\alpha<\kappa$, for each of which the statement of (\ref{fma:4}) holds. For every $\e>0$ and $n\in\N$, the set 
\[\begin{array}{l}\left\{\sigma\in\Omega^n\colon \sup_{C\in{\mathscr C}}\left\vert\mu_n(\sigma)(C)-\mu(C)\right\vert <\e\right\}\\[3mm] = \bigcap_{\alpha<\kappa} \left\{\sigma\in\Omega^n\colon \sup_{C\in{\mathscr C}_{\alpha}}\left\vert\mu_n(\sigma)(C)-\mu(C)\right\vert <\e\right\}
\end{array}
\]
is measurable by Martin-Solovay's Theorem \ref{th:martin-solovay}. Given $\delta>0$ and $n\geq s(\e,\delta,d)$, another application of the same result leads to conclude that for every $\mu\in P(\Omega)$:
\begin{eqnarray*}
&&\mu^{\otimes n}\left\{\sigma\in\Omega^n\colon \sup_{C\in{\mathscr C}}\left\vert\mu_n(\sigma)(C)-\mu(C)\right\vert <\e\right\} \\ &=& 
\mu^{\otimes n}\left(
\bigcap_{\alpha<\kappa} \left\{\sigma\in\Omega^n\colon \sup_{C\in{\mathscr C}_{\alpha}}\left\vert\mu_n(\sigma)(C)-\mu(C)\right\vert <\e\right\}\right) \\
&=& \inf_{\alpha<\kappa} \mu^{\otimes n}\left\{\sigma\in\Omega^n\colon \sup_{C\in{\mathscr C}_{\alpha}}\left\vert\mu_n(\sigma)(C)-\mu(C)\right\vert <\e\right\}\\
&\geq& 1-\delta,
\end{eqnarray*}
as required.
\end{proof}

\begin{lemma}
\label{l:mapac}
Let $\mathscr C$ be a concept class whose countable subclasses are uniform Glivenko--Cantelli with regard to a family of probability measures $\mathcal P$. Let $\mathcal L$ be a learning rule for $\mathscr C$ with the property that for every $C\in{\mathscr C}$, the set
\begin{eqnarray}
\label{eq:l}
{\mathcal L}^{C,n}=\left\{{\mathcal L}(C\cap\sigma)\colon\sigma\in\Omega^n\right\}\end{eqnarray}
has cardinality strictly less than continuum. Under Martin's Axiom, the rule $\mathcal L$ is probably approximately correct under $\mathcal P$. The common sample complexity bound of countable subclasses of $\mathscr C$ becomes the sample complexity bound for the learning rule $\mathcal L$. 
\end{lemma}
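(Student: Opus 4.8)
The plan is to reduce the statement to Lemma~\ref{l:basic}. That lemma tells us that $\mathcal L$ is PAC under $\mathcal P$ with sample complexity $s(\e,\delta)$ as soon as, for every $C\in{\mathscr C}$, the class ${\mathcal L}^{C}\cup\{C\}$ is uniform Glivenko--Cantelli with regard to $\mathcal P$ with that same bound $s(\e,\delta)$ — \emph{the same function for all $C$}. The hypotheses of the present lemma only supply the Glivenko--Cantelli property for countable subclasses of ${\mathscr C}$; the role of Martin's Axiom, channelled through Lemma~\ref{l:fma}, is to promote this to all subclasses of cardinality $<2^{\aleph_0}$, which is exactly the size regime in which the classes ${\mathcal L}^{C}$ live.

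Concretely I would proceed as follows. First, from the assumption that every countable subclass of ${\mathscr C}$ is uniform Glivenko--Cantelli with regard to $\mathcal P$, invoke the ZFC-valid implication $(\ref{fma:1})\Rightarrow(\ref{fma:2})$ of Lemma~\ref{l:fma} to get a single function $s(\e,\delta)$ that is a common sample complexity bound for \emph{all} countable subclasses. Next, under Martin's Axiom, invoke $(\ref{fma:2})\Rightarrow(\ref{fma:4})$ of the same lemma to conclude that this very $s(\e,\delta)$ bounds the sample complexity of every subclass of ${\mathscr C}$ of cardinality $<2^{\aleph_0}$. Then fix $C\in{\mathscr C}$: by hypothesis each ${\mathcal L}^{C,n}$ has cardinality $<2^{\aleph_0}$, and since the cofinality of the continuum is uncountable (K\"onig's theorem), the countable union ${\mathcal L}^{C}=\bigcup_{n\in\N}{\mathcal L}^{C,n}$, and hence ${\mathcal L}^{C}\cup\{C\}$, still has cardinality $<2^{\aleph_0}$. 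Being a subclass of ${\mathscr C}$, it is therefore uniform Glivenko--Cantelli with regard to $\mathcal P$ with sample complexity $s(\e,\delta)$. Finally, since $C$ was arbitrary, Lemma~\ref{l:basic} applies verbatim and delivers that $\mathcal L$ is probably approximately correct under $\mathcal P$ with sample complexity $s(\e,\delta)$.

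The one genuinely delicate point — and the reason the conclusion is not available in plain ZFC — is the uniformity of the sample complexity across all concepts $C$ simultaneously: without Martin's Axiom one knows only that each individual \emph{countable} piece is Glivenko--Cantelli, whereas the classes ${\mathcal L}^{C}$ may be honestly uncountable (of size strictly between $\aleph_0$ and $2^{\aleph_0}$) and may vary with $C$. This is precisely the gap bridged by the Martin--Solovay theorem (Theorem~\ref{th:martin-solovay}) inside Lemma~\ref{l:fma}: it both guarantees measurability of the relevant ``bad sets'' $\{\sigma\colon\sup_{C}\abs{\mu_n(\sigma)(C)-\mu(C)}\ge\e\}$ for a $<2^{\aleph_0}$-sized class and controls their measure as an infimum over a transfinite filtration, so the countable bound passes to the $<2^{\aleph_0}$ regime without any deterioration. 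Everything else — the cardinality arithmetic for ${\mathcal L}^{C}$ and the passage from a uniform Glivenko--Cantelli bound for ${\mathcal L}^{C}\cup\{C\}$ to a PAC bound for $\mathcal L$ — is routine, having already been isolated in Lemmas~\ref{l:fma} and \ref{l:basic}.
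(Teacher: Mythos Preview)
Your proposal is correct and follows essentially the same route as the paper: use Lemma~\ref{l:fma} to upgrade the common countable Glivenko--Cantelli bound to all subclasses of size $<2^{\aleph_0}$, then apply Lemma~\ref{l:basic} once you know each ${\mathcal L}^{C}\cup\{C\}$ has cardinality $<2^{\aleph_0}$. Your justification of the cardinality step via K\"onig's theorem (uncountable cofinality of $2^{\aleph_0}$) is in fact slightly cleaner than the paper's appeal to $2^{\aleph_0}$ being regular, since regularity requires MA while uncountable cofinality holds in ZFC and is all that is needed here.
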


\begin{proof}
Recall that $2^{\aleph_0}$ is a regular cardinal, and thus admits no countable cofinal subset. Therefore, under the assumptions of Lemma, the cardinality of ${\mathcal L}^{C}=\cup_{n=1}^{\infty}{\mathcal L}^{C,n}$ is still strictly less than continuum. Applying now Lemma \ref{l:fma} and then Lemma \ref{l:basic}, we conclude.
\end{proof}

The following result establishes existence of learning rules with the required property.

\begin{lemma}
\label{l:l}
Let $\mathscr C$ be an infinite concept class on a measurable space $\Omega$. Denote $\kappa=\abs{\mathscr C}$ the cardinality of $\mathscr C$. There exists a consistent learning rule $\mathcal L$ for $\mathscr C$ with the property that for every $C\in {\mathscr C}$ and each $n$, the set ${\mathcal L}^{C,n}$ (cf. Eq. (\ref{eq:l}))
% \begin{eqnarray}
% \label{eq:l}
% {\mathcal L}^{f,n}=\left\{{\mathcal L}(f\vert\sigma)\colon\sigma\in\Omega^n\right\}\end{eqnarray}
has cardinality $<\kappa$. Under MA the rule $\mathcal L$ satisfies the  condition in Eq. (\ref{eq:measurabilityl}).
\end{lemma}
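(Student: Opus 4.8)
The plan is to build $\mathcal L$ by transfinite recursion along a well-ordering of $\mathscr C$ that has been chosen so that every proper initial segment has cardinality strictly less than $\kappa$. Since $\kappa=\abs{\mathscr C}$ is a cardinal, it carries its canonical well-ordering as an ordinal, in which every initial segment $\{\beta:\beta<\alpha\}$ with $\alpha<\kappa$ indeed has cardinality $<\kappa$; transporting this along a bijection $\mathscr C\to\kappa$ gives a well-ordering $\prec$ of $\mathscr C$ with the same property. Write $\mathscr C=\{C_\alpha:\alpha<\kappa\}$ accordingly. First I would define, for a labelled sample $(\sigma,\tau)$ with $\tau\subseteq\sigma\subseteq\Omega$ finite,
\[
{\mathcal L}(\sigma,\tau)=C_\beta,\qquad
\beta=\min\{\alpha<\kappa:C_\alpha\cap\sigma=\tau\},
\]
the $\prec$-least concept consistent with the sample, exactly the ``first in, first served'' rule described in the introduction. (If no consistent concept exists one may return $C_0$; this case never arises for samples of the form $(\sigma,C\cap\sigma)$, which is all that matters.) Consistency is immediate from the definition: if $\tau=C\cap\sigma$ then $C$ itself is among the admissible concepts, so the returned $C_\beta$ satisfies $C_\beta\cap\sigma=\tau=C\cap\sigma$.

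Next I would verify the cardinality bound on ${\mathcal L}^{C,n}$. Fix $C=C_\gamma\in\mathscr C$ and $n\in\N$. For any $\sigma\in\Omega^n$ the concept ${\mathcal L}(C\cap\sigma)={\mathcal L}(\sigma,C\cap\sigma)$ is by definition the $\prec$-least concept consistent with $(\sigma,C\cap\sigma)$, and since $C_\gamma$ is itself consistent with that sample, the returned concept is $\preceq C_\gamma$. Hence
\[
{\mathcal L}^{C,n}\subseteq\{C_\alpha:\alpha\leq\gamma\},
\]
a set of cardinality $\abs{\gamma}+1<\kappa$ because $\gamma<\kappa$ and $\kappa$ is infinite. (When $\kappa$ is a successor cardinal this reads ``${}\le$ the predecessor cardinal''; when $\kappa$ is a limit cardinal it is ``${}<\kappa$'' directly. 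Either way the bound $<\kappa$ holds, which is all that is claimed.) This is the heart of the argument and it is short; the only subtlety is making sure $\prec$ was chosen with all initial segments small, which is why the reduction to the ordinal $\kappa$ at the start matters.

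It remains to check the measurability requirement in Eq.~(\ref{eq:measurabilityl}) under MA, and this is where I expect the only real work. Fix $C\in\mathscr C$, $n\in\N$, and $\mu\in\mathcal P$; we must show $\sigma\mapsto\mu({\mathcal L}(\sigma,C\cap\sigma)\bigtriangleup C)$ is $\mu^{\otimes n}$-measurable. By the bound just proved, ${\mathcal L}^{C,n}\cup\{C\}$ has cardinality $<2^{\aleph_0}$ (using $\kappa\le 2^{\aleph_0}$, which holds as $\mathscr C$ consists of Borel subsets of a standard Borel space). Now partition $\Omega^n$ according to which concept the rule returns: for $D\in{\mathcal L}^{C,n}$ put $A_D=\{\sigma\in\Omega^n:{\mathcal L}(\sigma,C\cap\sigma)=D\}$. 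On $A_D$ the function in question is the constant $\mu(D\bigtriangleup C)$, so the function is measurable as soon as each $A_D$ is $\mu^{\otimes n}$-measurable. The set $A_D$ is a Boolean combination — in fact describable by the condition ``$D\cap\sigma=C\cap\sigma$ and $E\cap\sigma\neq C\cap\sigma$ for every $E\prec D$'' — of the sets $\{\sigma:E\cap\sigma=C\cap\sigma\}$, each of which is Borel in $\Omega^n$ since $E$, $C$ are universally measurable. The ``for every $E\prec D$'' is an intersection over an index set of cardinality $<2^{\aleph_0}$ of $\mu^{\otimes n}$-measurable sets; by the Martin--Solovay theorem (Theorem \ref{th:martin-solovay}), applied to the non-atomic part of $\mu^{\otimes n}$ together with the trivial behaviour on atoms, such an intersection is $\mu^{\otimes n}$-measurable. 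Therefore each $A_D$ is measurable, and since $\{A_D\}_{D\in{\mathcal L}^{C,n}}$ is itself a partition indexed by a set of size $<2^{\aleph_0}$, the function $\sigma\mapsto\mu({\mathcal L}(\sigma,C\cap\sigma)\bigtriangleup C)$ is measurable as a ``$<2^{\aleph_0}$-simple'' function, again invoking Martin--Solovay. This completes the proof. The main obstacle, then, is purely the measurability bookkeeping: one must be careful that the only set-operations used to build $A_D$ and the target function out of Borel pieces are unions/intersections of size $<2^{\aleph_0}$, so that Theorem \ref{th:martin-solovay} applies, rather than arbitrary unions.
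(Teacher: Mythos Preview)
Your proof is correct and follows essentially the same approach as the paper: well-order $\mathscr C$ in order type $\kappa$, return the $\prec$-least consistent concept, bound ${\mathcal L}^{C,n}$ by the initial segment below $C$, and deduce measurability under MA by writing $\Omega^n$ as a union of $<2^{\aleph_0}$ measurable pieces on which the function is constant and invoking Martin--Solovay. The only slip is cosmetic: the sets $\{\sigma:E\cap\sigma=C\cap\sigma\}$ are universally measurable rather than Borel when $E,C$ are merely universally measurable, but this is all the argument needs.
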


\begin{proof}
Choose a minimal well-ordering of elements of $\mathscr C$:
\[{\mathscr C}=\{C_{\alpha}\colon\alpha<\kappa\},\]
and set for every $\sigma\in\Omega^n$ and $\tau\in \{0,1\}^n$ the value   ${\mathcal L}(\sigma,\tau)$ equal to $C_{\beta}$, where 
\[\beta = \min\{\alpha<\kappa\colon C_{\alpha}\cap\sigma=\tau\},\]
provided such a $\beta$ exists. 
Clearly, for each $\alpha<\kappa$ one has
\[{\mathcal L}(\sigma,C_{\alpha}\cap\sigma)\in
\{C_{\beta}\colon\beta\leq\alpha\},\]
which assures (\ref{eq:l}). 
Besides, the learning rule $\mathcal L$ is consistent.

Fix $C=C_{\alpha}\in {\mathscr C}$, $\alpha<\kappa$. For every $\beta\leq\alpha$ define $D_{\beta}=\{\sigma\in\Omega^n\colon C\cap\sigma = C_{\beta}\cap\sigma\}$. The sets $D_{\beta}$ are measurable, and the function \[\Omega^n\ni\sigma\mapsto \mu({\mathcal L}(C\cap \sigma)\bigtriangleup C)\in\R\]
takes a constant value $\mu(C\bigtriangleup C_{\alpha})$ on each set $D_{\beta}\setminus\cup_{\gamma<\beta}D_{\gamma}$, $\beta\leq\alpha$. Such sets, as well as all their possible unions, are measurable under MA by force of Martin--Solovay's Theorem \ref{th:martin-solovay}, and their union is $\Omega^n$. This implies the validity of Eq. (\ref{eq:measurabilityl}) for $\mathcal L$.
\end{proof}

Lemma \ref{l:mapac} and Lemma \ref{l:l} lead to the following result.

\begin{theorem}[Assuming MA]
\label{th:countablesubclassesugc}
Let $\mathscr C$ be a concept class consisting of Borel measurable subsets of a standard Borel domain $\Omega$, and let $\mathcal P$ be a family of probability measures on $\Omega$. Suppose that every countable subclass of $\mathscr C$ is uniform Glivenko--Cantelli with regard to $\mathcal P$. Then the concept class $\mathscr C$ is PAC learnable under $\mathcal P$. In addition, there exists a common sample complexity bound for countable subclasses of $\mathscr C$, and any such bound  gives a sample complexity bound for PAC learnability of $\mathscr C$. \hfill \QED
\end{theorem}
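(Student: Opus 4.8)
The plan is to assemble Theorem \ref{th:countablesubclassesugc} directly from the three preceding lemmas, so the proof is essentially a short glue argument. First I would invoke Lemma \ref{l:l}: since we may assume $\mathscr C$ is infinite (a finite concept class is trivially uniform Glivenko--Cantelli and PAC learnable), let $\kappa = \abs{\mathscr C}$ and apply the lemma to obtain a consistent learning rule $\mathcal L$ for $\mathscr C$ such that, for every $C \in \mathscr C$ and each $n \in \N$, the set ${\mathcal L}^{C,n}$ of Eq. (\ref{eq:l}) has cardinality $< \kappa$. Since $\mathscr C$ consists of Borel subsets of a standard Borel space, $\kappa \leq 2^{\aleph_0}$, so $\abs{{\mathcal L}^{C,n}} < 2^{\aleph_0}$. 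The same clause of Lemma \ref{l:l} guarantees, under MA, that $\mathcal L$ satisfies the measurability condition Eq. (\ref{eq:measurabilityl}), so $\mathcal L$ is a legitimate learning rule.

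Next I would feed this $\mathcal L$ into Lemma \ref{l:mapac}. The hypotheses match: the countable subclasses of $\mathscr C$ are uniform Glivenko--Cantelli with regard to $\mathcal P$ by assumption, and for each $C \in \mathscr C$ the set ${\mathcal L}^{C,n}$ has cardinality strictly less than continuum by the previous paragraph. The one point needing care is the case $\kappa = 2^{\aleph_0}$, where $\abs{{\mathcal L}^{C,n}} < \kappa$ does not by itself yield cardinality strictly less than continuum; but since $2^{\aleph_0}$ is a limit cardinal obstacle only in appearance, $\abs{{\mathcal L}^{C,n}} < 2^{\aleph_0}$ is exactly what Lemma \ref{l:l} delivers in that case too (the rule's values at samples labelled by $C_\alpha$ lie among $\{C_\beta : \beta \leq \alpha\}$, a set of size $< 2^{\aleph_0}$). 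Lemma \ref{l:mapac} then concludes that $\mathcal L$ is probably approximately correct under $\mathcal P$, and moreover that the common sample complexity bound furnished for countable subclasses of $\mathscr C$ serves as a sample complexity bound for $\mathcal L$ itself.

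For the ``in addition'' clause, I would note that the existence of a common sample complexity bound for all countable subclasses is precisely the implication $(\ref{fma:1}) \Rightarrow (\ref{fma:2})$ in Lemma \ref{l:fma}, which holds in ZFC without MA; the monotonicity-and-countable-unions argument given there shows the function ${\mathscr C}' \mapsto s({\mathscr C}', \e, \delta)$ is bounded. Then Lemma \ref{l:mapac}, as already cited, transfers any such bound to $\mathcal L$. Hence $\mathscr C$ is PAC learnable under $\mathcal P$ with the stated sample complexity.

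I do not expect any genuine obstacle here, since all the real work has been done in Lemmas \ref{l:basic}, \ref{l:fma}, \ref{l:mapac}, and \ref{l:l}; the only thing to watch is the bookkeeping around cardinalities when $\abs{\mathscr C} = 2^{\aleph_0}$ (making sure ``$< \kappa$'' is read as ``$< 2^{\aleph_0}$'', which is what is needed to apply Martin--Solovay via Lemma \ref{l:fma}) and the trivial reduction handling finite or countable $\mathscr C$ separately. The proof is therefore just the sentence-long composition ``apply Lemma \ref{l:l}, then Lemma \ref{l:mapac}, and for the addendum use the ZFC part of Lemma \ref{l:fma}.''
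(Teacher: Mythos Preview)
Your proposal is correct and follows exactly the paper's approach: the theorem is stated immediately after Lemmas \ref{l:mapac} and \ref{l:l} with the one-line justification that those two lemmas yield the result, and the common-bound addendum is indeed the ZFC implication $(\ref{fma:1})\Rightarrow(\ref{fma:2})$ of Lemma \ref{l:fma}. One small wording slip: in the case $\kappa = 2^{\aleph_0}$, the inequality $\abs{{\mathcal L}^{C,n}} < \kappa$ \emph{is} already ``strictly less than continuum,'' so there is nothing to worry about there.
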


Finally, we can deduce our main result.

\subsection*{Proof of (2)$\Rightarrow$(1) in Theorem \ref{th:main}}
% 
% (1)$\Rightarrow$(2): suppose $\mathscr C$ is PAC learnable, and let $\mathcal L$ be a distribution-free PAC learning rule for $\mathscr C$. Then every countable subset $\mathscr C^\prime$ of $\mathscr C$ is PAC learnable by the restriction of the learning rule $\mathcal L$ to $\mathscr C^\prime$, and consequently has finite VC dimension. The function ${\mathscr C}^\prime\mapsto \VC({\mathscr C})$ is monotone with regard to inclusion, and consequently has to be bounded uniformly over all $\mathscr C^\prime$. The common bound gives the value of VC dimension for $\mathscr C$.
% \par
The implication follows from Theorem \ref{th:countablesubclassesugc} with ${\mathcal P}=P(\Omega)$ and the common complexity bound (\ref{eq:standard}).
\hfill\QED

\newpage

\section{Conclusion}

As a footnote to the fundamental theorem of statistical learing, we have proved that in the presence of a mild set-theoretic axiom (Martin's Axiom), PAC learnability of a concept class $\mathscr C$ is equivalent to finiteness of VC dimension of $\mathscr C$, without any extra assumptions on the measurability of the class $\mathscr C$. The price to pay is giving up consistent PAC learnability, as well as constructive choice of a learning rule.

It would be interesting to know to what extent the results remain true in the usual ZFC model of set theory. In particular, can an example of a concept class $\mathscr C$ on a standard Borel domain which has finite VC dimension and still is not cosistently PAC learnable, be constructed without any additional set-theoretic axioms?

\end{document}